\documentclass[11pt]{article}

\usepackage[margin=1in]{geometry}
\usepackage[T1]{fontenc}
\usepackage[utf8]{inputenc}
\usepackage{amsmath,amssymb}
\usepackage{graphicx}
\usepackage{booktabs}
\usepackage{multirow}
\usepackage{array}
\usepackage[numbers,sort&compress]{natbib}
\usepackage{hyperref}
\hypersetup{
    colorlinks=true,
    linkcolor=black,
    citecolor=black,
    urlcolor=black
}
\usepackage{xcolor}
\usepackage{microtype}
\usepackage{float}
\usepackage{amsthm}

\usepackage{xurl}

\newtheorem{Proposition}{Proposition}
\newtheorem{Corollary}{Corollary}

\title{Platonic Projection Structures: Operator-Induced Observability in Representation Learning}

\author{
\begin{minipage}{0.95\textwidth}
\centering
Kazuo Ishii$^{1,*}$, Bishnu Prasad Gautam$^{1}$, Jieling Wu$^{1}$, and Javaid Saher$^{2}$\\[0.6em]
\footnotesize
$^{1}$ Department of Applied Information Engineering, Faculty of Engineering, Suwa University of Science,\\
Chino 391-0292, Nagano, Japan\\
$^{2}$ Department of Information Engineering, Kanazawa Gakuin University,\\
Kanazawa 920-1392, Ishikawa, Japan\\
$^{*}$ Corresponding author: kishii@rs.sus.ac.jp
\end{minipage}
}

\date{}

\begin{document}

\maketitle

\begin{abstract}
We characterize observability in representation learning through Platonic Projection Structures (PPS), an operator-theoretic framework 
for analyzing representation accessibility under partial observation. 
Rather than treating observable outputs as direct reflections of latent representations, PPS models observation as a geometry induced by a self-adjoint positive semidefinite operator acting on a latent representation space.
A system is represented as a triple
$
(\mathcal{H}, \Pi, O),
$
where $\mathcal{H}$ denotes a latent representation space,
$\Pi \succeq 0$ is an observation operator, and
$O(v)=\langle v,\Pi v\rangle$
defines an induced scalar observable.
The framework characterizes observability through the quotient geometry
$
\mathcal{H}/\ker(\Pi),
$
which represents equivalence classes of latent states that are indistinguishable under observation.
From this perspective, observable behavior is governed not by latent representations themselves, but by the geometry induced through the observation operator.
We show that both quantum measurement and representation inference under linear observation models can be formulated within this common 
operator-theoretic structure while differing in the algebraic properties of their observation operators. 
Within this perspective, quantum measurement serves primarily as a mathematically canonical example of projection-mediated observability.
The correspondence developed in PPS is therefore structural rather than physical.
Within the same framework, representation transfer and knowledge distillation can be interpreted as approximate preservation of observable geometry through the intertwining condition
$
\Phi \Pi_T \approx \Pi_S \Phi.
$
PPS further reveals a structural limitation of output-based interpretability:
latent components contained in $\ker(\Pi)$ are fundamentally inaccessible from observables generated through the induced observation process.
Accordingly, attribution and explanation methods inherit intrinsic constraints imposed by the observation geometry itself.
We provide controlled empirical validations demonstrating kernel-invariant observability, projection-induced attribution gaps, and rank-controlled observable geometry in latent representation spaces.
Overall, PPS provides a mathematically explicit characterization of observability through operator-induced quotient geometry, offering a unified perspective on representation accessibility, interpretability, and projection-mediated inference.
\end{abstract}

\noindent\textbf{Keywords:} observability; representation learning; operator-theoretic framework; positive semidefinite operators; knowledge distillation; interpretability; quotient geometry; quantum measurement

\section{Introduction}
{\sloppy
Modern machine learning systems are increasingly interpreted through observable quantities 
such as output distributions, attribution maps, representations, and feature 
activations~\cite{bengio_rep,doshi,rudin}.
\par}
At the same time, recent developments in information-theoretic analysis
have highlighted that observable outputs often provide only partial access
to the underlying latent structure of a system
~\cite{tishby_ib,shwartz_ziv,alei_vib,mine}.
Despite substantial progress in explainable artificial intelligence (XAI),
representation analysis, and information bottleneck \mbox{approaches
~\cite{tishby_ib,shwartz_ziv,alei_vib,mine,bengio_rep,doshi,rudin},}
the notion of \emph{observability} itself remains largely implicit rather than explicitly 
formalized in modern learning systems.
In many existing formulations, observable outputs are implicitly treated as sufficiently informative reflections of latent representations. 
However, practical learning systems frequently operate under restricted observation geometries in which only selected latent directions contribute to observable behavior.
This paper introduces \emph{Platonic Projection Structures} (PPS), an operator-theoretic framework for analyzing projection-mediated observability in representation learning systems~\cite{halmos,reed_simon,bhatia}. 
Rather than treating observation as direct access to latent states, PPS models observable quantities as induced through a self-adjoint positive semidefinite operator acting on a latent Hilbert space~\cite{halmos,bhatia}.
Within PPS, a system is represented as
\begin{equation}
(\mathcal{H}, \Pi, O),
\label{eq:pps_triple}
\end{equation}
where $\mathcal{H}$ denotes a latent representation space, $\Pi \succeq 0$ is an observation operator, and the observable quantity is defined by
\begin{equation}
O(v)=\langle v,\Pi v\rangle.
\label{eq:observable}
\end{equation}
The primary observable object in PPS is the projected representation
$\Pi v$, which determines observable equivalence classes.
The scalar quantity $O(v)=\langle v,\Pi v\rangle$ is introduced as an
operator-induced energy functional that summarizes certain observable
aspects of the representation, but does not by itself characterize the
full observable geometry.
Observable accessibility is determined by the projection operator through
the equivalence relation induced by $\Pi$ and the resulting quotient
structure $\mathcal{H}/\ker(\Pi)$.
The central principle of PPS is that observability is governed not by latent representations themselves, 
but by the geometry induced through the observation operator. 
These assumptions are standard in operator theory and positive semidefinite analysis~\cite{reed_simon,bhatia}.

The operator $\Pi$ induces the equivalence relation
\begin{equation}
v_1 \sim_\Pi v_2
\quad\Longleftrightarrow\quad
v_1-v_2\in\ker(\Pi),
\label{eq:equivalence}
\end{equation}
yielding the quotient structure
\begin{equation}
\mathcal{H}/\ker(\Pi),
\label{eq:quotient}
\end{equation}
which represents the effective observable space.
Under this formulation, latent components contained in $\ker(\Pi)$ are structurally inaccessible from observables generated through Equation~\eqref{eq:observable}.
This viewpoint leads naturally to a spectral characterization of observability through the eigenspectrum of the induced observation operator~\cite{scholkopf_smola,aronszajn,jolliffe,spectral_ml,ng_spectral}.
The eigenspectrum of $\Pi$ determines which latent directions are strongly observable, weakly observable, or entirely suppressed. 
Consequently, within the PPS formulation, the effective observable dimension is characterized by the rank and 
spectral structure of the observation operator~\cite{spectral_ml,scholkopf_smola,amari}.
The PPS framework provides a unified abstraction for projection-mediated observation across multiple domains. This perspective is related to operator-theoretic formulations appearing in quantum mechanics, kernel methods, and information geometry~\cite{vonneumann,nielsen2000,schuld,biamonte,density_ml,quantum_kernel,amari}. In quantum systems, idealized measurements correspond to orthogonal projections satisfying~\cite{vonneumann,nielsen2000}
\begin{equation}
\Pi^2=\Pi,
\label{eq:projection}
\end{equation}
whereas deep learning systems generally induce non-idempotent positive semidefinite operators under linear output mappings.
Despite these differences, both settings share the same abstract operator-theoretic structure: observable quantities are generated through operator-induced structure rather than through direct access to latent states.
Within this perspective, representation transfer and knowledge distillation can also be interpreted geometrically. 
Within PPS, projection-mediated observability is treated as an explicit mathematical structure rather than as a purely metaphorical analogy between domains.

Given teacher and student observation operators $\Pi_T$ and $\Pi_S$, a transfer map
\[
\Phi:\mathcal{H}_T\rightarrow\mathcal{H}_S
\]
approximately preserves observable geometry when
\begin{equation}
\Phi\Pi_T \approx \Pi_S\Phi.
\label{eq:intertwining}
\end{equation}
In the present formulation, $\Phi$ is assumed to be a linear transfer map acting on the latent representation space. 
This formulation is motivated by approximate intertwining relations commonly appearing in operator-theoretic alignment and representation-transfer settings. 
Equation~\eqref{eq:intertwining} interprets distillation as approximate consistency between induced observation 
geometries, rather than solely as output-level distribution matching~\cite{hinton2015,fitnets,rkd,crd,attention_transfer}.
A central implication of PPS concerns the structural limits of output-based interpretability. 
Since observables generated through Equation~\eqref{eq:observable} are invariant under perturbations contained in $\ker(\Pi)$, they depend only on the equivalence classes induced by Equation~\eqref{eq:equivalence} and represented by the quotient structure Equation~\eqref{eq:quotient}.
Latent components lying in $\ker(\Pi)$ cannot, in general, be recovered from observables alone.
From this perspective, limitations of interpretability are not merely algorithmic, but geometric and operator-induced.

The PPS framework is conceptually related to several existing research directions.
Information bottleneck formulations characterize information compression through mutual-information constraints~\cite{shannon,tishby_ib,shwartz_ziv,alei_vib,mine}, whereas PPS characterizes observability through operator-induced geometry on latent spaces.
Similarly, concept bottleneck models and representation-constrained learning methods
may be interpreted as imposing explicit structural constraints on induced observable subspaces
~\cite{cbm,tcav,irm,causal_rep}. PPS also exhibits structural parallels with observability theory in control systems~\cite{kalman}, 
although the present framework focuses on projection-induced accessibility in representation spaces rather than 
dynamical state reconstruction.

To evaluate the PPS formulation empirically, we construct controlled latent-space experiments that directly examine projection-induced observability. The experiments verify three central predictions of the framework:
(i) observables generated through Equation~\eqref{eq:observable} remain invariant under perturbations confined to $\ker(\Pi)$,
(ii) the rank structure of $\Pi$ governs effective observable dimensionality and predictive behavior, and
(iii) attribution methods inherit structural limitations induced by the observation geometry.
These experiments are designed primarily as theory-validation studies intended to isolate projection-mediated 
observability effects rather than as benchmark-oriented evaluations of predictive performance. Accordingly, 
the goal of the empirical analysis is not to establish state-of-the-art performance on large-scale datasets, 
but to directly test whether the structural predictions of PPS emerge under controlled experimental conditions.

The primary contribution of this work is not the introduction of new operator-theoretic machinery. 
While the mathematical ingredients of PPS, including positive semidefinite operators, kernels, quotient 
spaces, and spectral decompositions, are classical, the central novelty lies in identifying and formalizing 
an operator-induced observability structure that has not been explicitly characterized in existing 
representation-learning frameworks.

The central novelty lies in explicitly formulating 
observability as an operator-induced equivalence structure governing representation accessibility. 
In particular, PPS identifies the quotient geometry $\mathcal{H}/\ker(\Pi)$ as the fundamental object 
underlying observable representations and provides a unified geometric framework connecting observability, 
representation transfer, and interpretability.

The resulting framework provides a mathematically explicit perspective on observability across projection-mediated representation systems. The main contributions of this paper are summarized as follows:
\begin{enumerate}
\item We introduce PPS, an operator-theoretic framework that formalizes observability in representation learning as an operator-induced geometric structure.
\item We identify operator-induced observability equivalence classes as a fundamental object governing 
representation accessibility and formalize them through the quotient geometry $\mathcal{H}/\ker(\Pi)$.
\item We provide a unified geometric interpretation of representation transfer, knowledge distillation, 
and output-based interpretability through operator-induced observability.
\item We establish the role of the spectral structure of positive semidefinite observation operators in determining effective observable dimensions and accessibility.
\item We provide empirical analyses validating key theoretical predictions of PPS, including kernel-invariant observability, projection-induced attribution gaps, and rank-controlled observable geometry.
\end{enumerate}

\section{Operator-Theoretic Framework of Observability}

This section introduces Platonic Projection Structures (PPS) as an operator-theoretic framework for analyzing projection-mediated observability in representation learning systems. 
Rather than treating observable quantities as direct reflections of latent states, PPS models observation as a geometric process induced through a self-adjoint positive semidefinite operator acting on a latent Hilbert space.

\subsection{Platonic Projection Structures}

Let $\mathcal{H}$ denote a real or complex Hilbert space of latent representations. 
Within the PPS framework introduced in Equation~\eqref{eq:pps_triple}, observability is induced through an observation operator
$
\Pi:\mathcal{H}\rightarrow\mathcal{H},
$
which generates observable quantities through the quadratic functional defined in Equation~\eqref{eq:observable}.
We distinguish two regimes of observation:
\begin{enumerate}
\item Strict projection operators satisfying Equation~\eqref{eq:projection};
\item Generalized positive semidefinite operators satisfying
$
\Pi \succeq 0.
$
\end{enumerate}
Idealized quantum measurements naturally belong to the first regime~\cite{vonneumann,nielsen2000}, whereas representation learning systems are generally associated with the second.
Despite this distinction, both settings share the same structural mechanism:
observable quantities are generated through operator-induced structure rather than through direct access to 
latent states.
The observation operator is assumed to satisfy
\[
\Pi^\dagger=\Pi,
\qquad
\langle v,\Pi v\rangle \geq 0
\quad
\forall v\in\mathcal{H},
\]
ensuring that the observable functional in Equation~\eqref{eq:observable} is well-defined and non-negative.
Under PPS, the operator $\Pi$ becomes the primitive object governing observability.
Observable structure is therefore determined not by latent representations themselves, but by the geometry induced through the observation operator.

\subsection{Observable Geometry and Quotient Structure}

Throughout this paper, the term ``observable geometry'' refers to the quotient-semimetric structure induced by $\Pi$ through the equivalence relation in Equation~\eqref{eq:equivalence} and the semi-inner product defined in Equation~\eqref{eq:semi_inner}.
A central consequence of PPS is that observability is defined only up to the equivalence relation introduced in Equation~\eqref{eq:equivalence}. 
Two latent states differing only by a component contained in $\ker(\Pi)$ are observationally indistinguishable under Equation~\eqref{eq:observable}.
Accordingly, the effective observable space is not the ambient latent space
$\mathcal H$ itself, but the quotient space
$\mathcal H/\ker(\Pi)$ defined in Equation~\eqref{eq:quotient},
a standard construction in operator theory~\cite{halmos,reed_simon}.
This quotient space represents equivalence classes of latent states
that generate identical observables under the induced observation process.
The operator $\Pi$ further induces the bilinear form
\begin{equation}
\langle v,w\rangle_\Pi
:=
\langle v,\Pi w\rangle,
\label{eq:semi_inner}
\end{equation}
which defines a semi-inner product structure on $\mathcal{H}$.
Since directions contained in $\ker(\Pi)$ contribute no observable signal,
the form in Equation~\eqref{eq:semi_inner} becomes non-degenerate only after
quotienting by $\ker(\Pi)$.
The quotient geometry in Equation~\eqref{eq:quotient} therefore defines the
effective observable geometry induced by the observation operator
~\cite{halmos,reed_simon}.
Under this viewpoint, observability becomes a structural property of induced operator geometry rather than a property of individual latent vectors.

\subsection{Spectral Structure of Observability}

Building on classical spectral methods
~\cite{scholkopf_smola,aronszajn,jolliffe,spectral_ml,ng_spectral},
the PPS framework naturally admits a spectral interpretation of observability.
Let
\begin{equation}
\Pi
=
\sum_i \lambda_i e_i e_i^\ast
\label{eq:spectral}
\end{equation}
denote the spectral decomposition of the observation operator.
The eigenvalues $\lambda_i$ determine the degree of observability associated with each latent direction:

\[
\lambda_i \gg 0
\Rightarrow
\text{strongly observable direction},
\]

\[
\lambda_i \approx 0
\Rightarrow
\text{weakly observable direction},
\]

\[
\lambda_i = 0
\Rightarrow
e_i \in \ker(\Pi)
\Rightarrow
\text{structurally unobservable direction}.
\]
Accordingly, the effective observable dimension is governed by
\begin{equation}
\mathrm{rank}(\Pi).
\label{eq:observable_rank}
\end{equation}
Equation~\eqref{eq:observable_rank} implies that observable dimensionality depends not on the ambient latent dimension itself, but on the spectral structure of the observation operator~\cite{lowrank,spectral_ml}.
From an information-theoretic perspective, the eigenspectrum in Equation~\eqref{eq:spectral} characterizes the relative accessibility of latent information under the induced observation \mbox{process~\cite{shannon,tishby_ib,amari}.}
Large eigenvalues correspond to latent directions strongly represented in observable quantities, whereas directions associated with vanishing eigenvalues become inaccessible through Equation~\eqref{eq:observable}.

\subsection{Structural Factorization of Observation}
The observation process is induced by PPS factors naturally through the quotient geometry defined in 
Equation~\eqref{eq:quotient}~\cite{halmos,reed_simon}.
Specifically, the observable functional in Equation~\eqref{eq:observable} admits the factorization
\begin{equation}
\mathcal{H}
\xrightarrow{\, q \,}
\mathcal{H}/\ker(\Pi)
\xrightarrow{\, \widetilde{O} \,}
\mathbb{R},
\label{eq:factorization}
\end{equation}
where $q$ denotes the canonical quotient map.
Equation~\eqref{eq:factorization} formalizes a central implication of PPS:
observable quantities do not uniquely determine latent states, but only equivalence classes induced by the observation geometry.
Consequently, latent components lying in $\ker(\Pi)$ are structurally inaccessible from observables generated through Equation~\eqref{eq:observable}. 
This limitation is therefore geometric rather than merely algorithmic.
From an information-geometric viewpoint, the quotient structure in Equation~\eqref{eq:quotient} may be interpreted as the effective observable information geometry induced by the observation operator~\cite{shap,gradcam,ig,rudin}.
The observation process therefore acts as a structured projection mechanism that compresses latent information according to the spectral accessibility encoded in Equation~\eqref{eq:spectral}~\cite{tishby_ib,alei_vib,mine}.
Overall, PPS provides a unified operator-theoretic framework in which observability, representation accessibility, and projection-induced information loss are described through induced observable geometry~\cite{bhatia,amari,bengio_rep}.

\subsection{Mathematical Setting and Assumptions}
The PPS framework is formulated in a Hilbert space setting in order to
provide a general operator-theoretic description of observability.
Throughout the theoretical development, we assume that the observation
operator
$
\Pi:\mathcal H \rightarrow \mathcal H
$
is a bounded self-adjoint positive semidefinite operator.
For the empirical studies presented in this paper, all latent
representation spaces are finite-dimensional Euclidean spaces
$\mathbb R^n$.
In this setting, $\Pi$ is represented by a symmetric positive
semidefinite matrix, and its spectral decomposition is finite and exact.
Consequently, the quotient geometry
$\mathcal H/\ker(\Pi)$ can be interpreted through standard finite-dimensional linear algebra.
The Hilbert space formulation is retained primarily to emphasize the generality of the operator-theoretic perspective rather than to require infinite-dimensional analysis.

\section{Projection-Mediated Observation Across Systems}

This section shows that both quantum measurement and deep learning inference can be interpreted within the PPS framework as instances of projection-mediated observation~\cite{vonneumann,nielsen2000,biamonte,schuld}. 
The correspondence is structural rather than physical: in both settings, observable quantities are generated through operator-induced geometry acting on latent representation spaces~\cite{reed_simon,bhatia,amari}.

\subsection{Quantum Measurement as Projection-Induced Observation}

Quantum measurement provides a canonical example of projection-mediated observability~\cite{vonneumann,nielsen2000}. 
Throughout this paper, $\Pi$ denotes a general observation operator in PPS, 
whereas $P_m$ refers specifically to a quantum measurement projector.
A quantum system is represented by a normalized state vector
$[
|\psi\rangle \in \mathcal{H}_Q,
\langle\psi|\psi\rangle = 1,
$]
where $\mathcal{H}_Q$ denotes a Hilbert space of admissible states.
Measurement is described by a Hermitian operator admitting the spectral decomposition~\cite{vonneumann,reed_simon}
\begin{equation}
M
=
\sum_m m P_m,
\label{eq:measurement}
\end{equation}
where each $P_m$ is an orthogonal projection operator satisfying Equation~\eqref{eq:projection}.
The probability of observing outcome $m$ is given by the Born rule~\cite{vonneumann,nielsen2000}
\begin{equation}
p(m)
=
\langle\psi|P_m|\psi\rangle.
\label{eq:born}
\end{equation}
Equation~\eqref{eq:born} is structurally equivalent to the PPS observable functional defined in \mbox{Equation~\eqref{eq:observable}} under the identification
\[
v=|\psi\rangle,
\qquad
\Pi=P_m.
\]
From the PPS viewpoint, quantum measurement can therefore be interpreted as a projection-mediated observation process:
\[
|\psi\rangle
\rightarrow
P_m
\rightarrow
p(m).
\]
This formulation emphasizes that measurement accesses only the projected component of the quantum state.
Information contained in complementary directions remains inaccessible under the induced observation process.

\subsection{Deep Learning Inference as Operator-Induced Observation}

Deep learning models equipped with linear output mappings can similarly be interpreted within the 
PPS framework as operator-mediated observation processes~\cite{bengio_rep}.
Let
\begin{equation}
f_\theta
=
f_L\circ f_{L-1}\circ\cdots\circ f_1
\label{eq:network}
\end{equation}
denote a deep neural network, and let
\[
z=f_\theta(x)\in\mathcal{H}_L
\]
represent the final latent representation.
In standard architectures, outputs are generated through a linear readout
\begin{equation}
y=Wz,
\label{eq:linear_readout}
\end{equation}
where $W$ maps latent representations to output space.
The corresponding quadratic observable becomes~\cite{bhatia}
\begin{equation}
\|y\|^2
=
\langle z,W^\top W z\rangle.
\label{eq:dl_observable}
\end{equation}
Equation~\eqref{eq:dl_observable} naturally defines the positive semidefinite observation operator~\cite{bhatia}
\begin{equation}
\Pi:=W^\top W.
\label{eq:wtw}
\end{equation}
Within architectures employing linear output mappings, Equation~\eqref{eq:wtw} shows that observable behavior 
is governed not solely by latent representations themselves, but also by the operator-induced geometry 
through which latent information becomes accessible~\cite{scholkopf_smola,spectral_ml,amari}.
The quadratic observable defined above should be understood as a canonical representation of the geometry 
induced by the linear readout operator.
It does not aim to fully characterize nonlinear output mappings such as softmax-based classification.
Such nonlinear observables lie outside the scope of the present formulation and constitute an important direction 
for future extensions.

Under PPS, deep learning inference can therefore be interpreted as
\[
z
\rightarrow
\Pi
\rightarrow
\|y\|^2,
\]
\textls[-15]{which is structurally analogous to the projection-mediated process described by \mbox{Equation~\eqref{eq:born}.}}
Importantly, PPS does not claim that neural networks are fundamentally quadratic systems.
Rather, Equation~\eqref{eq:wtw} identifies a canonical observable geometry induced by the output map.
Directions contained in $\ker(\Pi)$ remain inaccessible from observables generated through Equation~\eqref{eq:dl_observable}. 
Consequently, latent representations are observable only up to the quotient geometry defined in Equation~\eqref{eq:quotient}.
\subsection{Structural Correspondence Between Quantum and Learning Systems}

Both quantum systems and learning systems equipped with linear observation operators
can be viewed through a common abstract structure
~\cite{vonneumann,biamonte,schuld,density_ml}
consisting of
\begin{enumerate}
\item A latent Hilbert space $\mathcal{H}$;
\item An observation operator $\Pi$;
\item An induced observable generated through Equation~\eqref{eq:observable}.
\end{enumerate}
Under this perspective, the two domains admit the formally parallel structure
\begin{align}
\text{Quantum system:}\quad&
\mathcal{H}_Q,\;
|\psi\rangle,\;
P_m,
\;
p(m)=\langle\psi|P_m|\psi\rangle,
\nonumber
\\
\text{Deep learning system:}\quad&
\mathcal{H}_L,\;
z,\;
W^\top W,
\;
\|y\|^2=\langle z,W^\top W z\rangle.
\nonumber
\end{align}
In both cases, observable quantities are induced through self-adjoint positive semidefinite operators acting on latent spaces~\cite{bhatia,reed_simon}.
The distinction lies not in the existence of projection-mediated observability itself, but in the algebraic structure of the corresponding operators.
Quantum systems typically involve orthogonal projections satisfying \mbox{Equation~\eqref{eq:projection}, }
whereas learning systems with linear output mappings induce non-idempotent positive semidefinite operators 
such as Equation~\eqref{eq:wtw}.
This correspondence should therefore be understood as operator-theoretic rather than physical.
\textls[-15]{PPS does not identify neural inference with quantum dynamics, but instead provides a unified geometric 
language for describing observability across projection-mediated systems~\cite{biamonte,schuld,quantum_kernel}. 
Figure~\ref{fig:pps_unified} summarizes the resulting correspondence between these systems.
PPS should therefore be understood as a simplified geometric characterization of observability under 
linear observation operators rather than as a complete description of modern neural-network architectures.}
\begin{figure}[H]

    \includegraphics[width=\linewidth]{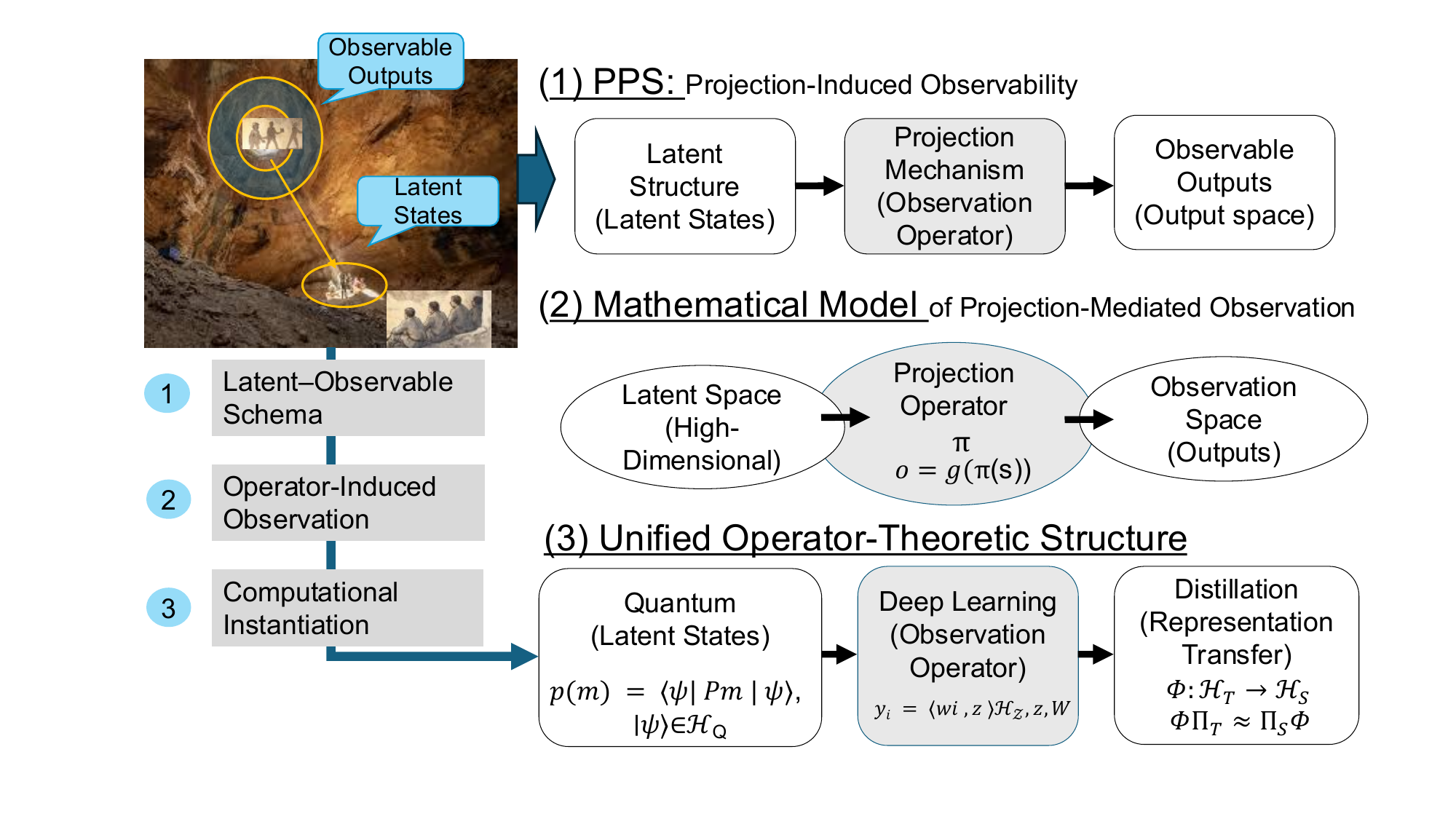}
    \caption{
    {PPS framework as projection-induced observation geometry.
}
Schematic illustration of projection-mediated observability under PPS. A latent state $v\in\mathcal{H}$ is mapped through a self-adjoint positive semidefinite operator $\Pi$ to an observable quantity generated through Equation~\eqref{eq:observable}.
    Quantum systems correspond to projection operators satisfying Equation~\eqref{eq:projection}, whereas deep learning systems generally induce non-idempotent positive semidefinite operators such as Equation~\eqref{eq:wtw}.
    }
    \label{fig:pps_unified}
\end{figure}

\section{Information Geometry and Representation Transfer}

Building on information theory, information bottleneck theory,
information geometry, and representation transfer learning
~\cite{shannon,tishby_ib,amari,hinton2015},
this section develops an information-theoretic interpretation of PPS
and introduces a geometric formulation of representation transfer.
Within PPS, observable information is determined not only by latent
representations themselves, but also by the observation geometry through
which latent directions become accessible. Representation transfer can
therefore be interpreted as approximate preservation of this induced
observable geometry between latent spaces.

\subsection{Information-Theoretic Interpretation of PPS}
Motivated by information theory, information bottleneck theory, and information geometry~\cite{shannon,tishby_ib,amari}, the PPS framework admits a natural interpretation from the perspective of information accessibility.
Let
$
v\in\mathcal{H}
$
be a latent representation, and consider observables generated through the quadratic functional defined in Equation~\eqref{eq:observable}. The observation operator determines which latent directions contribute to observable quantities.
The spectral decomposition introduced in Equation~\eqref{eq:spectral} therefore characterizes the relative accessibility of latent information~\cite{tishby_ib,amari}. 
Large eigenvalues correspond to strongly observable latent directions, whereas vanishing eigenvalues identify directions that are completely unobservable. 
Accordingly, the effective observable dimension is governed by Equation~\eqref{eq:observable_rank}, rather than by the ambient latent dimensionality itself.
Observable quantities generated through Equation~\eqref{eq:observable}
depend only on information contained within the effective observable quotient
defined in Equation~\eqref{eq:quotient}.
\subsection{Knowledge Distillation as Operator Alignment}

Knowledge distillation can be interpreted as a geometric alignment problem between operator-induced observable structures~\cite{hinton2015,rkd,crd,attention_transfer}.
Standard knowledge distillation is commonly formulated by minimizing the divergence between teacher and student output distributions~\cite{hinton2015}:
\begin{equation}
L_{\mathrm{KD}}
=
D_{\mathrm{KL}}(p_T\parallel p_S),
\label{eq:kd_loss}
\end{equation}
where
\[
p_T=\sigma(W_T z_T),
\qquad
p_S=\sigma(W_S z_S).
\]
From the PPS viewpoint, Equation~\eqref{eq:kd_loss} aligns observable distributions in output space while leaving the geometry of the underlying observation operators implicit.
Several extensions, including FitNets~\cite{fitnets}, RKD~\cite{rkd}, CRD~\cite{crd}, and attention transfer~\cite{attention_transfer}, introduce increasingly structured alignment constraints between latent representations.
Within PPS, these approaches may be viewed as introducing progressively richer forms of alignment that can be analyzed through the operator-alignment perspective.
We associate the following positive semidefinite operators with teacher and student models:
\begin{equation}
\Pi_T=W_T^\top W_T,
\qquad
\Pi_S=W_S^\top W_S.
\label{eq:teacher_student_ops}
\end{equation}
The corresponding quadratic observables become
\begin{equation}
O_T(z_T)
=
\langle z_T,\Pi_T z_T\rangle,
\qquad
O_S(z_S)
=
\langle z_S,\Pi_S z_S\rangle.
\label{eq:teacher_student_observables}
\end{equation}
Equation~\eqref{eq:teacher_student_ops} defines the induced observation geometries of the teacher and \mbox{student systems.}

\subsection{Operator Consistency and Intertwining Relations}

When the teacher and student latent spaces differ, we introduce a linear map
\begin{equation}
\Phi:\mathcal{H}_T\rightarrow\mathcal{H}_S.
\label{eq:interspace_map}
\end{equation}
A natural notion of structural compatibility between teacher and student observation geometries is expressed through the approximate intertwining condition introduced previously in Equation~\eqref{eq:intertwining}.
Equation~\eqref{eq:intertwining} states that projecting before or after the transfer map yields approximately consistent observable structure.
This condition expresses approximate preservation of observable structure under representation transfer.
The corresponding commutative structure is represented by
\[
\begin{array}{ccc}
\mathcal{H}_T
& \xrightarrow{\ \Pi_T\ } &
\mathcal{H}_T
\\
\downarrow{\Phi}
&
&
\downarrow{\Phi}
\\
\mathcal{H}_S
& \xrightarrow{\ \Pi_S\ } &
\mathcal{H}_S
\end{array}
\]
Under exact commutativity, observable quotient structures induced by Equation~\eqref{eq:quotient} are preserved under the transfer map.
Approximate commutativity therefore provides a geometric notion of observable consistency between latent representation systems.
To empirically examine this interpretation, we compare standard KL-based distillation with a PPS-regularized variant explicitly incorporating the operator-consistency objective
\begin{equation}
L_{\mathrm{PPS}}
=
\|\Phi\Pi_T-\Pi_S\Phi\|_F^2.
\label{eq:pps_loss}
\end{equation}

\subsection{Experimental Setup}
To evaluate the PPS formulation in a practical representation-learning setting, we conducted a 
knowledge-distillation experiment on the CIFAR-10 dataset.
The teacher model was a ResNet-18 classifier, while the student model consisted of a lightweight 
convolutional neural network with two convolutional blocks followed by two fully connected layers. 
Both models were trained on CIFAR-10 using the Adam optimizer with a learning rate of $10^{-3}$ and a 
batch size of 128.
For reproducibility, all experiments were performed using a fixed random seed of 42. 
The teacher model was trained for 10 epochs. Student models were trained under two conditions:
(1) standard knowledge distillation using KL-divergence and (2) PPS-aware distillation using KL-divergence together with the PPS commutativity regularization term defined in Equation~\eqref{eq:pps_loss}.
The distillation temperature was set to $T=4.0$, and the PPS regularization coefficient was set to $\lambda=0.5$.
The observation operators were constructed from the final linear layers of the teacher and student \mbox{networks as}
\[
\Pi_T = W_T^\top W_T,
\qquad
\Pi_S = W_S^\top W_S ,
\]
where $W_T$ and $W_S$ denote the corresponding output-layer weight matrices.
The transfer map $\Phi:\mathcal{H}_T \rightarrow \mathcal{H}_S$ was implemented as a bias-free trainable 
linear transformation with $\Phi \in \mathbb{R}^{d_S \times d_T}$,
where $d_T$ and $d_S$ denote the dimensions of the teacher and student penultimate feature spaces, respectively.
For empirical analyses involving spectral quantities, numerical rank was computed using an eigenvalue
threshold of $\varepsilon = 10^{-6}$, with eigenvalues below this threshold treated as numerically zero.
The experiment was introduced specifically to evaluate whether the PPS formulation remains meaningful 
when applied to learned neural-network representations rather than synthetic latent-space constructions.

The primary objective of this experiment is not to establish superior 
predictive performance, but to evaluate whether operator-level consistency 
can be introduced as an explicit optimization target within 
representation-transfer frameworks. 
Whether reduced operator inconsistency leads to improved transferability, 
robustness, interpretability, or performance on unseen tasks remains an 
important open question for future investigation.

Figure~\ref{fig:projection_noncommutativity} summarizes the comparison between standard KL-based distillation and PPS-regularized distillation.
Panel (a) shows the evolution of the commutativity gap induced by Equation~\eqref{eq:pps_loss}, panel (b) reports predictive accuracy, and panel (c) shows the final operator inconsistency after training.
The PPS-regularized formulation consistently reduces operator inconsistency between teacher and student systems while maintaining comparable predictive performance~\cite{hinton2015,rkd,crd}.

\begin{figure}[H]

\includegraphics[width=.7\linewidth]{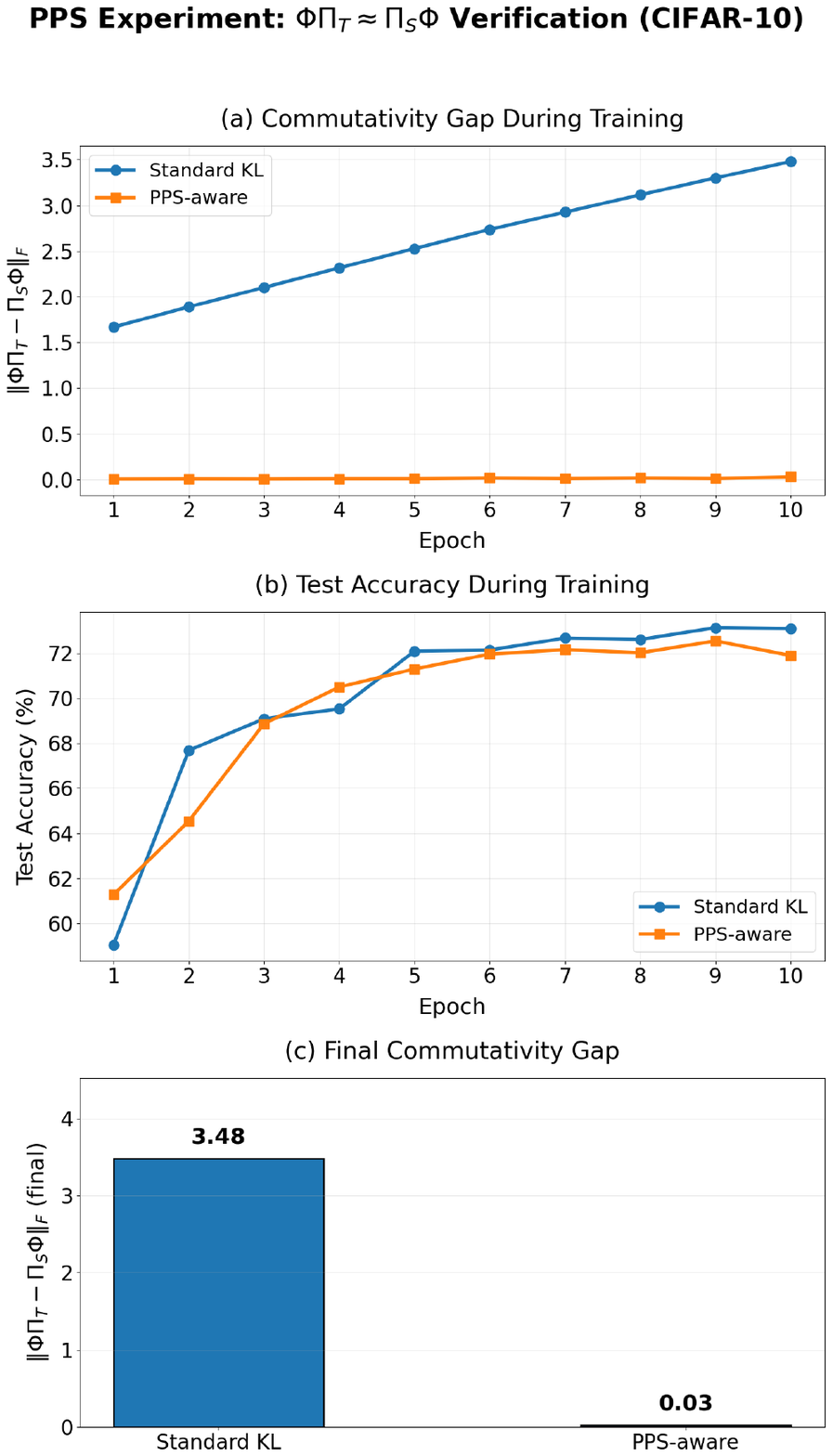}
\caption{
{Operator-consistent
 knowledge distillation under PPS.}
Comparison between standard KL-based distillation and PPS-regularized distillation on CIFAR-10.
The PPS formulation explicitly introduces the operator-consistency objective defined in Equation~\eqref{eq:pps_loss}.
(\textbf{a}) Evolution of the commutativity gap during training.
(\textbf{b}) Test accuracy over epochs.
(\textbf{c}) Final operator inconsistency after training.
The results indicate that operator-level consistency can be introduced as an explicit optimization objective without degrading predictive performance in this setting.
}
\label{fig:projection_noncommutativity}
\end{figure}

These results demonstrate that operator-level consistency can be measured
and explicitly optimized within a standard distillation framework while
maintaining comparable predictive performance. 
The findings suggest that observable geometry may provide a complementary 
perspective on representation transfer beyond output-level distribution matching, 
thereby motivating future investigation of transferability, robustness, and 
interpretability from an operator-theoretic viewpoint.
Within PPS, the primary object of analysis therefore becomes the geometry induced by the operators themselves rather than the output distributions alone. 
The code used in this study is available from the corresponding author upon reasonable request.

Rather than claiming universality, this experiment serves as a controlled illustration that operator-level geometric consistency can be introduced as an explicit and measurable quantity within standard representation-transfer pipelines. 

\section{Structural Limits of Output-Based Interpretability}

This section analyzes the structural limits of output-based interpretability from the PPS perspective~\cite{doshi,rudin,amari}.
The central implication is that explanation methods operating on observable outputs cannot, in general, access the full latent representation space.
Instead, they inherit the geometric constraints imposed by the observation operator.
Under PPS, observables generated through Equation~\eqref{eq:observable} depend only on the operator-induced observable geometry.
Consequently, latent components contained in $\ker(\Pi)$ are structurally inaccessible to explanation methods derived solely from outputs.
This limitation is not a failure of a particular attribution algorithm, but a consequence of projection-mediated observability itself.

\subsection{Projection-Induced Observability Limits}

To make this limitation explicit, we consider the strict projection regime introduced in Equation~\eqref{eq:projection}~\cite{vonneumann,halmos}.
Let $P:\mathcal{H}\rightarrow\mathcal{H}$ be an orthogonal projection operator.
For a latent representation $x\in\mathcal{H}$, the projection-induced decomposition is given by
\begin{equation}
x
=
P(x)
+
\bigl(x-P(x)\bigr),
\label{eq:projection_decomposition}
\end{equation}
where $P(x)$ is the observable component and $x-P(x)$ lies in the complementary unobservable component.
Equation~\eqref{eq:projection_decomposition} formalizes a key point:
interpretability is not an intrinsic property of the latent representation $x$ alone, but of the observation operator acting on it.
In this regime, output-based explanations can access only the component preserved by $P$.
Latent directions outside $\mathrm{range}(P)$ do not contribute to observable outputs and therefore cannot be recovered by explanation methods that operate only after projection.
This projection-induced decomposition is illustrated conceptually in Figure~\ref{fig:interpretability_projection}.

\subsection{Observational Equivalence Under Projection}

The quotient geometry introduced in Equation~\eqref{eq:quotient} implies that distinct latent states may be 
observationally indistinguishable.
The following proposition formalizes this observational equivalence.
The observable equivalence structure is determined by the projection
operator itself rather than by the scalar functional $O(v)$.
The latter serves only as a low-dimensional summary statistic of the
projected representation $\Pi v$ and does not determine the full
observable geometry.

\begin{figure}[H]
    
    \includegraphics[width=\linewidth]{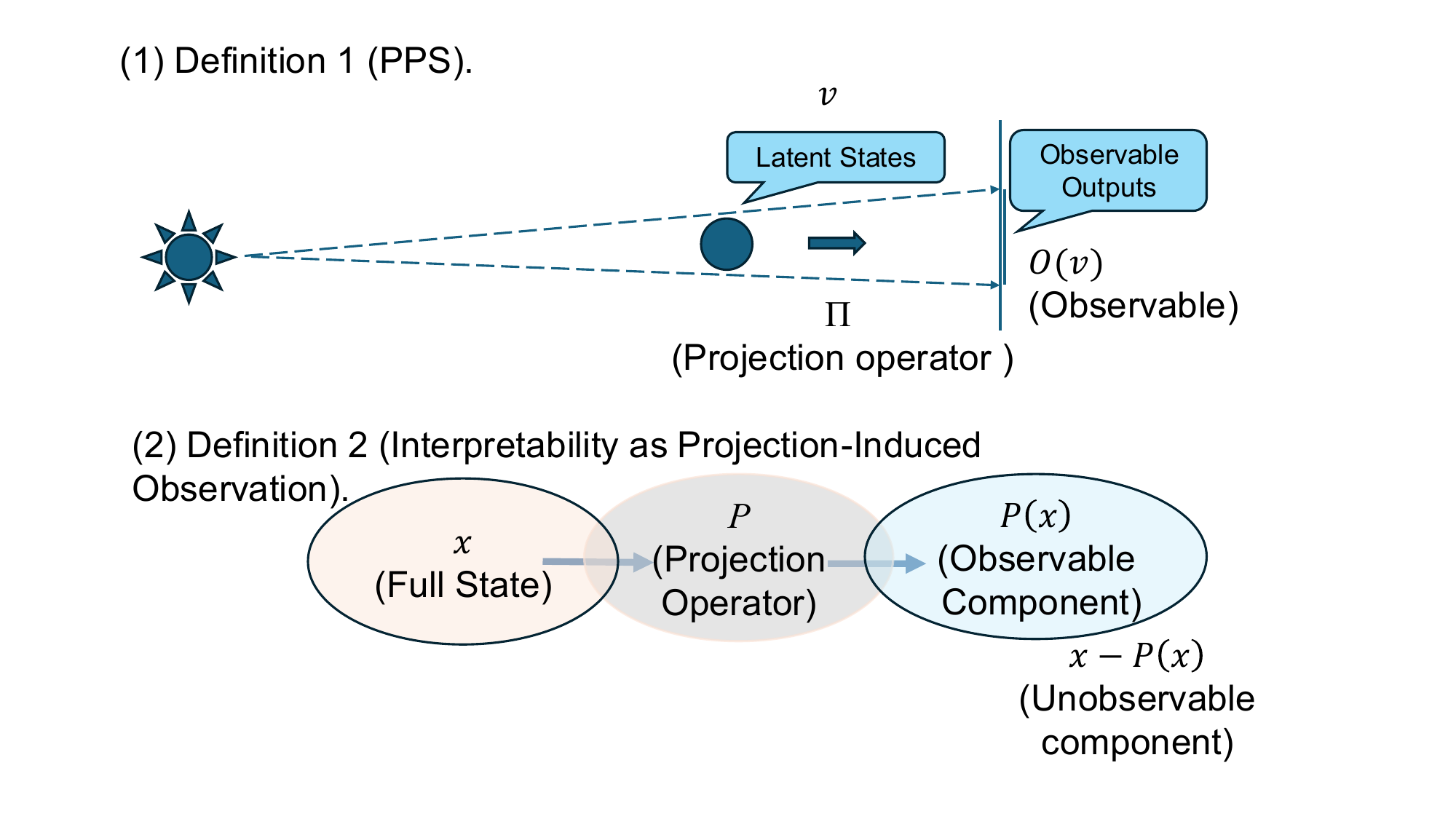}
    \caption{
    {Conceptual
 illustration of observable and unobservable components under projection.}
    The projection operator $P$ maps a latent representation $x$ to its observable component $P(x)$.
    The complementary component $x-P(x)$ lies outside the observable subspace and is therefore inaccessible to output-based explanation methods. 
    }
    \label{fig:interpretability_projection}
\end{figure}

\begin{Proposition}[Observable Equivalence Under Projection]
\label{prop:observable_equivalence}
Let the observable functional be defined as in Equation~\eqref{eq:observable}, where $\Pi$ is a self-adjoint positive semidefinite operator.
If $v_1\sim_\Pi v_2$ under Equation~\eqref{eq:equivalence}, then
\begin{equation}
O(v_1)=O(v_2).
\label{eq:observational_equivalence_result}
\end{equation}
Consequently, any explanation method that operates solely on observables generated through \mbox{Equation~\eqref{eq:observable}} cannot distinguish between latent states belonging to the same equivalence class induced by $\ker(\Pi)$.
\end{Proposition}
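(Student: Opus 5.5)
The plan is a direct computation using only self-adjointness of $\Pi$ and the definition of the kernel. Suppose $v_1\sim_\Pi v_2$ under Equation~\eqref{eq:equivalence}, and write $v_1 = v_2 + k$ with $k := v_1 - v_2\in\ker(\Pi)$. Substituting this into Equation~\eqref{eq:observable} and expanding by sesquilinearity gives
\[
O(v_1)=\langle v_2+k,\Pi v_2+\Pi k\rangle
=\langle v_2,\Pi v_2\rangle+\langle v_2,\Pi k\rangle+\langle k,\Pi v_2\rangle+\langle k,\Pi k\rangle .
\]
Since $\Pi k=0$, the second and fourth terms vanish at once. For the third term, self-adjointness ($\Pi^\dagger=\Pi$) gives $\langle k,\Pi v_2\rangle=\langle \Pi k, v_2\rangle=0$. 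What remains is $O(v_1)=O(v_2)$, which is Equation~\eqref{eq:observational_equivalence_result}.

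An equivalent route, which the paper's own language suggests, is to show that $\Pi$ itself is constant on equivalence classes: $\Pi v_1=\Pi v_2$ because $\Pi(v_1-v_2)=0$. The semi-inner product in Equation~\eqref{eq:semi_inner} then satisfies $\langle v_1,v_1\rangle_\Pi=\langle v_1,\Pi v_2\rangle=\langle \Pi v_1, v_2\rangle=\langle v_2,\Pi v_2\rangle$. This shows that the projected representation $\Pi v$, and not only the scalar $O(v)$, is a class invariant. In particular, $O$ descends to a well-defined map $\widetilde{O}$ on $\mathcal{H}/\ker(\Pi)$ with $O=\widetilde{O}\circ q$, which is the factorization in Equation~\eqref{eq:factorization}. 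I would include this remark, because it is the form needed for the ``consequently'' clause.

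For the consequence about explanation methods, I would make precise what ``operates solely on observables'' means. An explanation method $E$ of this kind is any map that depends on $v$ only through the observables generated by Equation~\eqref{eq:observable}, that is, $E(v)=F(O(v))$ or, more generally, $E(v)=F(\Pi v)$ for some function $F$. The first step already shows that both $O(v)$ and $\Pi v$ are constant on each class $v+\ker(\Pi)$. Hence $E(v_1)=E(v_2)$ whenever $v_1\sim_\Pi v_2$, so $E$ also factors through $q$ and cannot separate states in the same equivalence class.

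The only delicate point is this formalization of ``operates solely on observables''. It has to be stated as a factorization hypothesis, after which the claim is immediate. I also need to watch for one subtlety: in the complex case, self-adjointness is genuinely used to kill the cross term $\langle k,\Pi v_2\rangle$. In the real symmetric case the same step is just symmetry of the matrix. The positive semidefiniteness assumption is not actually needed for this proposition; it only guarantees $O\ge 0$.
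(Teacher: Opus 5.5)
Your proof is correct and matches the paper's: the paper's argument is your second route, noting $\Pi v_1=\Pi v_2$ because $v_1-v_2\in\ker(\Pi)$ and then substituting into $O(v)=\langle v,\Pi v\rangle$. You are more careful than the paper on two points: you make explicit that replacing $v_1$ in the first argument uses self-adjointness ($\langle v_1,\Pi v_2\rangle=\langle \Pi v_1,v_2\rangle$), and your factorization hypothesis $E(v)=F(O(v))$ or $E(v)=F(\Pi v)$ gives precise content to the ``consequently'' clause, which the paper states only informally.
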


\begin{proof}
Let $u=v_1-v_2$ with $u\in\ker(\Pi)$.
Then $\Pi u=0$, and hence $\Pi v_1=\Pi(v_2+u)=\Pi v_2$.
Substituting this into Equation~\eqref{eq:observable} yields Equation~\eqref{eq:observational_equivalence_result}.
Thus, observables cannot distinguish latent states differing only within $\ker(\Pi)$.
\end{proof}
Although algebraically straightforward, Proposition~1 formalizes observational indistinguishability as a 
quotient-geometric property induced by the observation operator, providing the conceptual basis for the 
PPS framework.

\begin{Corollary}[Non-Identifiability of Latent Structure]
\label{cor:nonidentifiable}
Under PPS, latent representations are identifiable only up to the quotient geometry defined in Equation~\eqref{eq:quotient}.
In particular, if $\dim(\ker(\Pi))>0$, then the mapping from latent states to observables is non-injective.
\end{Corollary}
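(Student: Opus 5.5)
The corollary follows almost directly from Proposition~\ref{prop:observable_equivalence}. The plan has two parts: first interpret ``identifiable only up to the quotient geometry'' as a factorization statement, then prove non-injectivity by an explicit construction.

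For the first part, I would note that the proof of Proposition~\ref{prop:observable_equivalence} gives more than equality of $O$. It shows that $v_1\sim_\Pi v_2$ implies $\Pi v_1=\Pi v_2$. So both the projected representation $v\mapsto \Pi v$ and the scalar observable $O$ are constant on the fibres of the canonical quotient map $q:\mathcal{H}\to\mathcal{H}/\ker(\Pi)$. By the universal property of the quotient, each of them descends to a well-defined map on $\mathcal{H}/\ker(\Pi)$. For $O$ this is the factorization in Equation~\eqref{eq:factorization}: $O=\widetilde{O}\circ q$, and similarly $\Pi=\widetilde{\Pi}\circ q$. Hence any quantity computed from observables is a function of $q(v)$ alone. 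Latent states can therefore be recovered at best up to their class $v+\ker(\Pi)$, which is the stated identifiability up to the quotient.

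I would also remark on the converse for the projected observable. If $\Pi v_1=\Pi v_2$, then $v_1-v_2\in\ker(\Pi)$, so $\widetilde{\Pi}$ is injective. This shows the quotient is exactly the right resolution for $\Pi v$, neither coarser nor finer. For the scalar $O$ alone, only the one-directional statement holds.

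For the second part, assume $\dim(\ker(\Pi))>0$ and pick a nonzero $u\in\ker(\Pi)$. For any $v\in\mathcal{H}$, set $v_1=v$ and $v_2=v+u$. Then $v_1\neq v_2$ but $v_1\sim_\Pi v_2$, so Proposition~\ref{prop:observable_equivalence} gives $O(v_1)=O(v_2)$ and in fact $\Pi v_1=\Pi v_2$. Both latent-to-observable maps, $v\mapsto O(v)$ and $v\mapsto \Pi v$, are therefore non-injective. Indeed every fibre contains the whole affine line $v+\mathbb{R}u$.

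The main obstacle is fixing what ``the mapping from latent states to observables'' means. I would state the result for the finer map $v\mapsto\Pi v$, since non-injectivity there implies non-injectivity of $O=\langle v,\Pi v\rangle$. I would also point out that $O$ is non-injective even when $\ker(\Pi)=\{0\}$, because $O(-v)=O(v)$. So the kernel condition is sufficient but not necessary for $O$, and it is the sharp condition only for $\Pi v$.
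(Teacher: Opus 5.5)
Your proposal is correct and matches the paper's reasoning. The paper states the corollary without a separate proof, as an immediate consequence of Proposition~\ref{prop:observable_equivalence} together with the factorization $O=\widetilde{O}\circ q$ in Equation~\eqref{eq:factorization}, and your pair $v$, $v+u$ with $0\neq u\in\ker(\Pi)$ is exactly the implied non-injectivity argument. Your refinements are also correct and sharpen the statement: $\widetilde{\Pi}$ is injective on $\mathcal{H}/\ker(\Pi)$, and $O$ alone is non-injective even when $\ker(\Pi)=\{0\}$ because $O(-v)=O(v)$ (note that going from $\Pi v_1=\Pi v_2$ to $O(v_1)=O(v_2)$ uses self-adjointness, via $\langle u,\Pi v_2\rangle=\langle \Pi u,v_2\rangle=0$).
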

This result shows that output-based interpretability methods cannot reconstruct the full latent representation space.
They can only characterize equivalence classes visible through the induced observation geometry.

\subsection{Structural Interpretability and Attribution Limits}

PPS does not imply that post hoc attribution methods are invalid.
Rather, the framework suggests that such methods are structurally bounded by the induced observation geometry through which observable quantities become accessible.
The PPS framework provides a structural interpretation of post hoc interpretability methods such as SHAP~\cite{shap}, LIME~\cite{lime}, GradCAM~\cite{gradcam}, and integrated gradients~\cite{ig}.
These methods operate on outputs, activations, or gradients that are already constrained by the observation geometry.
Therefore, they characterize only the observable component of the system, not the full latent structure.
If a latent factor is encoded in a direction that lies in or near $\ker(\Pi)$, its contribution to observables generated through Equation~\eqref{eq:observable} is suppressed.
As a result, attribution methods may assign negligible importance to a latent component even when that component remains meaningful within the pre-projection latent process.
This distinction is important because observational sensitivity and intervention sensitivity need not coincide~\cite{pearl,causal_rep}.
A latent component may be unobservable from the output perspective while still affecting downstream behavior through transformations that occur before projection.
Thus, absence of attribution should not be interpreted as absence of latent relevance.
Figure~\ref{fig:pps_ethical} summarizes this structural limitation of post hoc explanation.

\begin{figure}[H]
 
  \includegraphics[width=\linewidth]{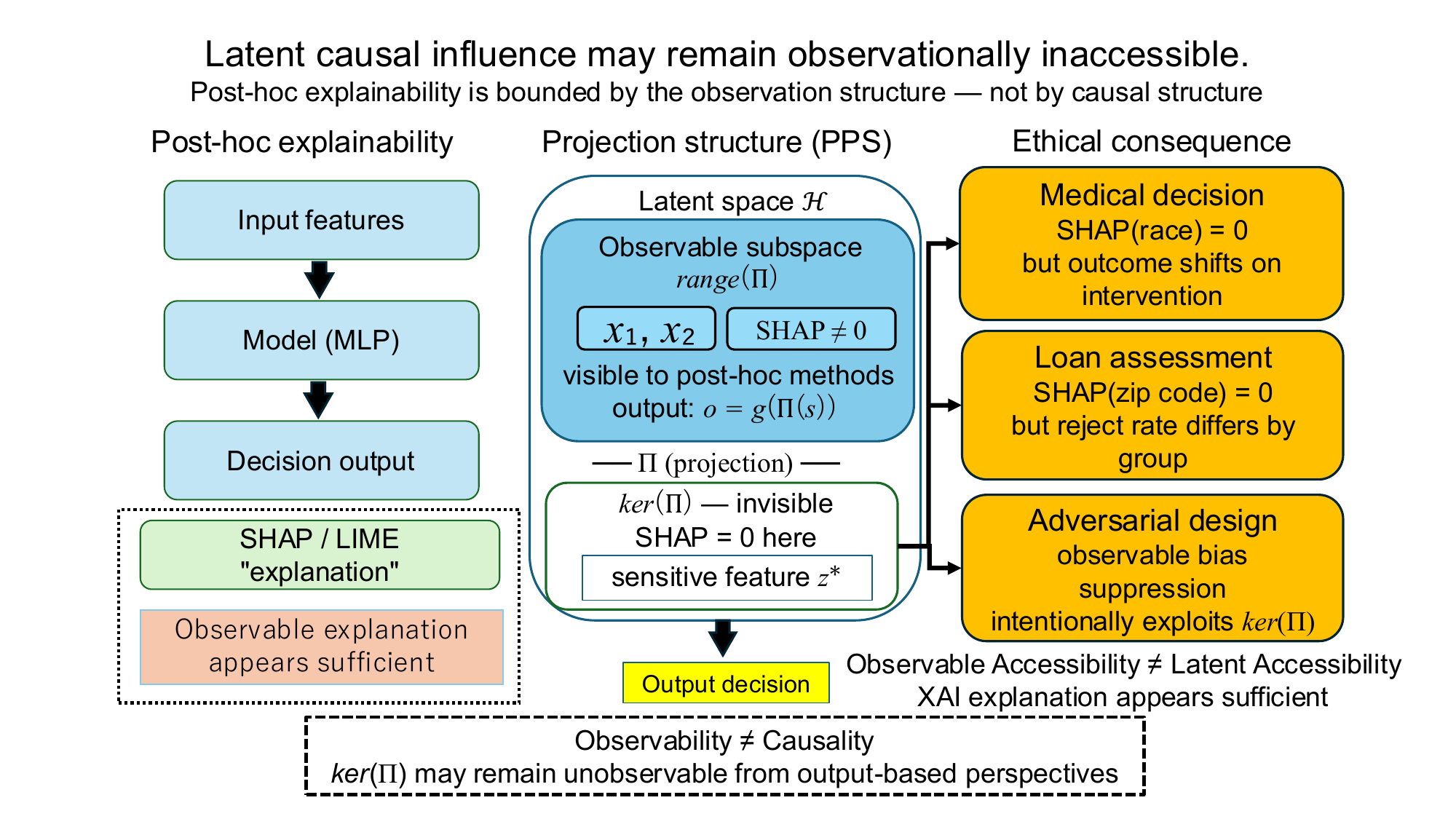}
  \caption{
  {Structural limitations of post hoc explainability under PPS.}
  Post hoc explanation methods operate only on the observable subspace induced by the observation operator.
  Latent components weakly represented within this observable geometry may remain inaccessible to output-based attribution methods such as SHAP, LIME, and GradCAM.
 Consequently, output-based explanations characterize only the observable quotient structure defined in Equation~\eqref{eq:quotient}, rather than the full latent representation space.
  }
  \label{fig:pps_ethical}
\end{figure}
To illustrate this phenomenon empirically, we consider a controlled synthetic setting in which a latent variable $z^\ast$ is placed outside the observable subspace.
When $z^\ast$ is excluded from the observable representation, SHAP assigns negligible attribution to it.
When the same variable is explicitly included in the observable representation, the attribution becomes non-zero.
This change reflects not a change in the underlying latent mechanism, but a change in the accessible observation geometry~\cite{amari,bhatia}.
Importantly, the intervention in this experiment is applied before projection.
Therefore, the experiment does not contradict Proposition~\ref{prop:observable_equivalence}, which concerns invariance under perturbations confined to $\ker(\Pi)$ after the observation geometry has been fixed.
Instead, it demonstrates a separation between latent-space sensitivity and output-based attribution~\cite{pearl,causal_rep}.
This observability--sensitivity gap is illustrated in Figure~\ref{fig:projection_gap}.

\begin{figure}[t]

\includegraphics[width=\linewidth]{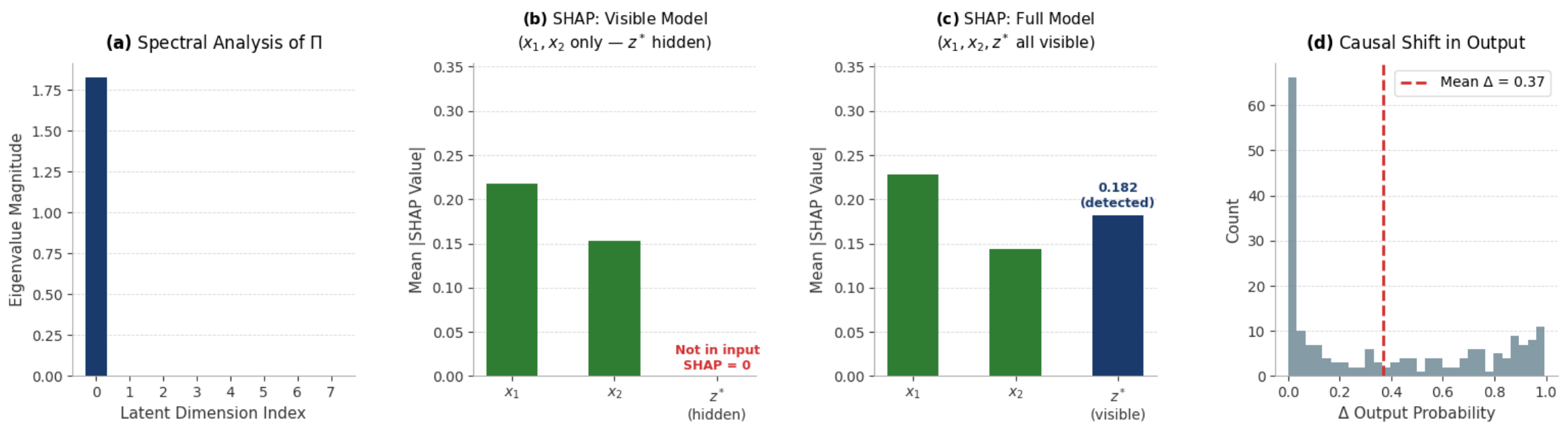}
\caption{
{Empirical
 illustration of the observability--sensitivity gap under PPS.}
(\textbf{a}) Spectral structure of the observation operator showing a non-trivial kernel.
(\textbf{b}) When the latent variable lies outside the observable subspace, SHAP attribution becomes negligible.
(\textbf{c}) When the same variable is included in the observable representation, attribution becomes non-zero.
(\textbf{d}) Latent intervention prior to projection induces measurable output variation despite negligible attribution.
Together, these results show that attribution methods inherit structural limitations imposed by projection-\mbox{mediated observability.}
}
\label{fig:projection_gap}
\end{figure}

\subsection{Toward Structural Accountability}

Projection-induced observability limits suggest that output transparency alone is insufficient for characterizing 
the behavior of representation learning systems. This motivates a shift from output-centric explanation toward 
structural analysis of the observation operator itself.
Within PPS, \emph{structural interpretability} means explicitly characterizing the operator geometry that determines what can become observable.
Correspondingly, \emph{structural accountability} concerns whether the observation operator preserves, suppresses, or distorts application-relevant latent directions.
This perspective leads to several structural questions:
\begin{itemize}
\item Which subspaces are preserved within $\mathrm{range}(\Pi)$?
\item Which latent directions are eliminated by $\ker(\Pi)$?
\item How does the spectral structure in Equation~\eqref{eq:spectral} regulate representational accessibility?
\item How does the effective observable dimension in Equation~\eqref{eq:observable_rank} constrain explanation and prediction?
\end{itemize}
The PPS framework therefore suggests that interpretability constraints can be imposed directly at the operator level.
Representative examples include spectral constraints controlling eigenvalue decay, subspace constraints enforcing preservation of designated latent directions, and operator regularization penalizing suppression of application-relevant components.
Existing approaches can be reinterpreted in this language.
Concept Bottleneck Models~\cite{cbm} impose explicit constraints on intermediate observable subspaces, while invariance-based learning methods may be viewed as inducing constraints on observation geometry~\cite{irm,causal_rep}.
PPS provides a unified formulation in which these methods correspond to different ways of shaping the operator that governs observability.
From this perspective, interpretability is not merely a property of explanations produced after inference.
Rather, it is a property of the observation geometry that determines what can become visible in the first place.

\section{Empirical Verification of PPS}

This section presents controlled experiments designed to directly evaluate
central predictions of the PPS framework. Unlike conventional
performance-oriented evaluations, these experiments are intentionally
constructed as operator-level theory-validation studies that isolate
projection-induced observability effects predicted by PPS.

The experiments focus on three theoretical consequences derived from the
operator-theoretic formulation developed in previous sections:
\begin{enumerate}
\item Invariance of observables under perturbations confined to $\ker(\Pi)$;
\item Dependence of observable behavior on the rank and spectral structure of the observation operator;
\item Approximate preservation of observable geometry through operator-consistent representation transfer.
\end{enumerate}
Accordingly, the empirical objective of this section is not to establish
state-of-the-art performance on large-scale benchmarks, but rather to
examine whether the theoretical predictions of PPS emerge under
controlled experimental conditions.

\subsection{Kernel-Invariant Observability}

We first evaluate the PPS prediction that observables remain invariant under perturbations confined to the kernel of the observation operator~\cite{halmos,reed_simon}.
Consider a latent representation space
$
\mathcal{H}\cong\mathbb{R}^D
$
equipped with a positive semidefinite observation operator
$
\Pi\succeq0
$
of rank
$
R<D.
$
The latent representation is explicitly decomposed into observable and structurally unobservable components:
\begin{equation}
v
=
v_{\mathrm{obs}}
+
v_{\ker},
\label{eq:latent_decomposition}
\end{equation}
where
\[
v_{\mathrm{obs}}\in\mathrm{range}(\Pi),
\qquad
v_{\ker}\in\ker(\Pi).
\]
Observables are generated through the PPS quadratic functional defined previously in Equation~\eqref{eq:observable}.
According to Proposition~\ref{prop:observable_equivalence}, perturbations restricted to $\ker(\Pi)$ should leave observables invariant, whereas perturbations within $\mathrm{range}(\Pi)$ should induce systematic observable variation.
To verify this prediction directly, we constructed an $8$-dimensional latent space with an observation operator of rank $4$.
The eigenspectrum therefore consists of four observable modes and four kernel modes. 
Two perturbation regimes were evaluated:

\begin{enumerate}
\item \textbf{Kernel perturbation:
}
\begin{equation}
v
\rightarrow
v+\delta_{\ker},
\qquad
\delta_{\ker}\in\ker(\Pi),
\label{eq:kernel_perturbation}
\end{equation}

\item \textbf{Observable perturbation:
}
\begin{equation}
v
\rightarrow
v+\delta_{\mathrm{obs}},
\qquad
\delta_{\mathrm{obs}}\in\mathrm{range}(\Pi),
\label{eq:observable_perturbation}
\end{equation}
\end{enumerate}

Figure~\ref{fig:kernel_invariance}a shows the eigenspectrum of the observation operator, separating observable and kernel directions.
Figure~\ref{fig:kernel_invariance}b demonstrates that perturbations generated through Equation~\eqref{eq:kernel_perturbation} leave observables numerically invariant up to machine precision.
In contrast, Figure~\ref{fig:kernel_invariance}c shows that perturbations generated through Equation~\eqref{eq:observable_perturbation} induce substantial observable variation with approximately quadratic dependence on perturbation magnitude.
This provides a direct empirical visualization of the PPS prediction that latent directions contained in $\ker(\Pi)$ are structurally inaccessible under the induced observation geometry. 
Finally, Figure~\ref{fig:kernel_invariance}d compares SHAP attribution behavior between restricted and unrestricted models.
When the model is explicitly constrained to $\mathrm{range}(\Pi)$, kernel dimensions receive zero attribution by construction.
In contrast, unrestricted models distribute attribution across both observable and kernel components.
This suggests that attribution methods may reflect representational leakage outside the intended 
observable subspace structure induced by the operator.

These attribution experiments are intended as illustrative demonstrations of projection-induced observability limits 
rather than benchmark evaluations of attribution performance. The objective is to visualize the qualitative
effect predicted by PPS, namely that attribution methods can only explain components that remain accessible 
through the induced observation geometry. Accordingly, statistical significance testing across multiple random 
seeds was not the primary focus of these controlled theory-validation experiments.

\begin{figure}[t]
\centering
\includegraphics[width=.80\linewidth]{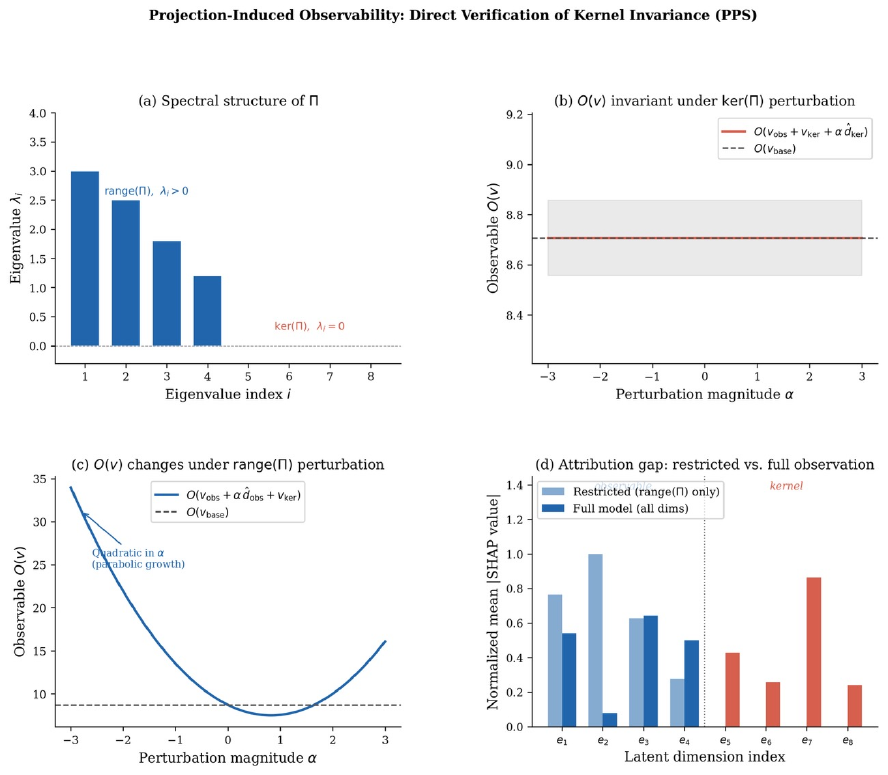}
\caption{
{Direct
 verification of kernel-invariant observability under PPS.}
(\textbf{a}) Eigenvalue spectrum of the observation operator showing decomposition into observable and kernel directions.
\mbox{(\textbf{b}) Observable} quantities remain invariant under perturbations confined to $\ker(\Pi)$, directly validating Proposition~\ref{prop:observable_equivalence}.
(\textbf{c}) Perturbations within $\mathrm{range}(\Pi)$ induce systematic observable variation.
\mbox{(\textbf{d}) SHAP} attribution comparison between restricted and unrestricted models.
Restricted models assign zero attribution to kernel dimensions, whereas unrestricted models do not.
}
\label{fig:kernel_invariance}
\end{figure}

\subsection{Rank-Controlled Observable Geometry}

We next investigate how the rank structure of the observation operator governs effective observable geometry~\cite{spectral_ml,jolliffe,lowrank}. Within PPS, the effective observable dimension is determined by Equation~\eqref{eq:observable_rank}. Consequently, observable behavior should depend not on the ambient latent dimensionality itself, but on the rank and spectral structure of the observation operator.
To evaluate this prediction, we generated a synthetic classification task in a $16$-dimensional latent space where the true signal was confined to an intrinsic subspace of dimension
$
r^\ast=8.
$
Observation operators
$
\Pi_r
$
with varying rank
$
r\in\{1,\dots,16\}
$
were constructed by progressively truncating the eigenspectrum introduced in Equation~\eqref{eq:spectral}. 
For each rank setting, we evaluated

\begin{enumerate}
\item Classification accuracy;
\item Attribution leakage into $\ker(\Pi)$;
\item Cumulative observable spectral energy.
\end{enumerate}
For empirical analyses involving rank estimation, numerical rank was computed using an eigenvalue 
threshold of $\varepsilon = 10^{-6}$, with eigenvalues below this threshold treated as numerically zero.

Figure~\ref{fig:rank_geometry}a shows that predictive performance increases systematically with $\mathrm{rank}(\Pi)$ and saturates near the intrinsic signal rank
$
r^\ast=8.
$ Accuracy increases from approximately $0.61$ at rank $1$ to approximately $0.94$ at rank $8$, after which additional observable dimensions produce minimal improvement.
Figure~\ref{fig:rank_geometry}b shows the normalized kernel-attribution leakage ratio, defined as the proportion of attribution mass assigned to dimensions outside the observable subspace. Leakage decreases monotonically as $\mathrm{rank}(\Pi)$ increases, indicating that progressively more signal-bearing directions become observable.
Figure~\ref{fig:rank_geometry}c shows the cumulative observable spectral-energy ratio captured by the induced observation geometry.
At $r^\ast = 8$, the observable subspace captures nearly all signal-associated spectral energy, consistent with the observed performance saturation. 
Finally, Figure~\ref{fig:rank_geometry}d visualizes cumulative spectral-energy accumulation profiles for multiple rank settings, illustrating how eigenspectrum truncation modifies effective observable geometry.
Collectively, these results support the PPS prediction that observable behavior is governed by the spectral structure of the observation operator rather than by latent dimensionality alone.
\begin{figure}[H]

\centering 
\includegraphics[width=.88\linewidth]{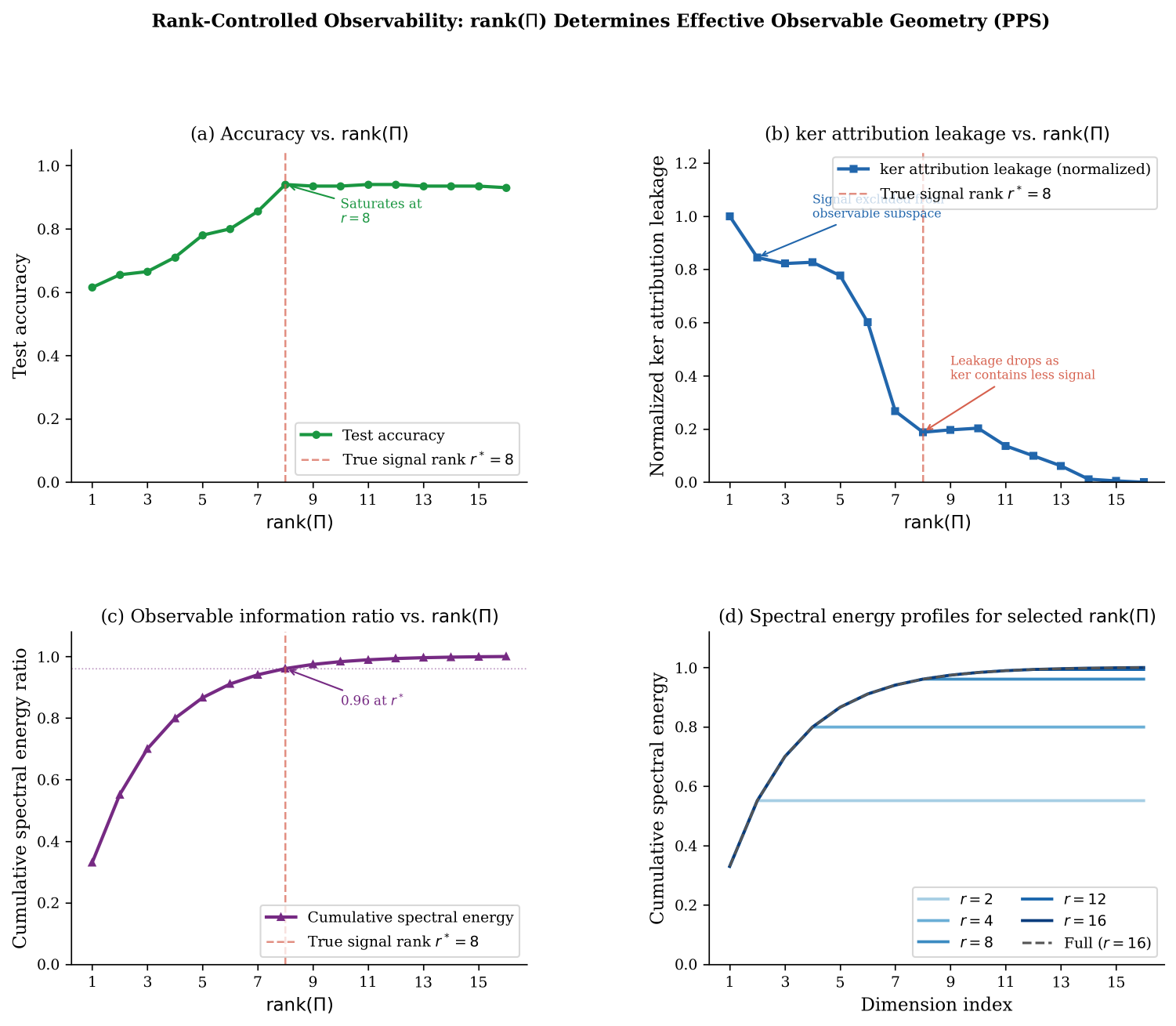}

\caption{
{Rank-controlled
 observable geometry under PPS.}
(\textbf{a}) Classification accuracy as a function of $\mathrm{rank}(\Pi)$.
Performance increases systematically with observable dimension and saturates near the intrinsic signal rank.
(\textbf{b}) Attribution leakage into $\ker(\Pi)$ decreases as the observable subspace expands.
(\textbf{c}) Cumulative observable spectral-energy ratio captured by the induced observation geometry.
(\textbf{d}) Spectral-energy accumulation profiles for multiple rank settings, illustrating how eigenspectrum truncation modifies effective observable structure.
}
\label{fig:rank_geometry}
\end{figure}

\subsection{Operator-Consistent Knowledge Distillation}

Finally, we evaluate the PPS interpretation of representation transfer introduced in Equation~\eqref{eq:intertwining}~\cite{hinton2015,rkd,crd}. 
The experiment compares standard KL-based knowledge distillation~\cite{hinton2015} with a PPS-regularized formulation incorporating the operator-consistency objective defined in Equation~\eqref{eq:pps_loss}. A ResNet18 teacher model and a lightweight CNN student model were trained on CIFAR-10. The interspace map introduced in Equation~\eqref{eq:interspace_map} was implemented as a trainable linear projection jointly optimized with the student network.

Figure~\ref{fig:projection_noncommutativity} summarizes the comparison between standard distillation and PPS-regularized distillation. Panel (a) shows the evolution of the commutativity gap induced by \mbox{Equation~\eqref{eq:pps_loss},} panel (b) reports predictive accuracy, and panel (c) shows the final operator inconsistency after training.
The PPS-regularized formulation consistently reduces operator inconsistency between teacher and student systems while maintaining comparable predictive accuracy. Importantly, the improvement occurs at the level of induced observation geometry rather than solely at the level of output agreement. These results support the PPS interpretation that successful representation transfer may depend not only on observable output matching, but also on approximate compatibility between the corresponding observation operators. Rather than serving as a benchmark-oriented evaluation, the experiment provides a controlled empirical verification that operator-induced geometry can be treated as an explicit and measurable object within representation-transfer pipelines.

\section{Discussion}

The PPS framework provides an operator-theoretic perspective on observability in representation learning systems~\cite{halmos,reed_simon,bhatia,bengio_rep}.
Rather than treating observable outputs as direct reflections of latent representations, PPS models observation as geometry induced through self-adjoint positive semidefinite operators acting on latent Hilbert spaces~\cite{halmos,bhatia,amari}.
This viewpoint connects several previously separate research directions, including information bottleneck theory, observability theory, representation learning, and interpretability analysis~\cite{tishby_ib,kalman,bengio_rep,doshi,rudin}.

\subsection{Relation to Information Bottleneck and Representation Learning}

PPS is conceptually related to the information bottleneck (IB) principle~\cite{shannon,tishby_ib,alei_vib,mine}, which studies compressed representations through mutual-information constraints.
Both frameworks emphasize that learned systems preserve only selected aspects of latent information.
However, the two approaches differ fundamentally in formulation.
Information bottleneck methods characterize compression probabilistically through random variables and mutual-information objectives~\cite{shannon,tishby_ib,alei_vib,mine},
whereas PPS characterizes observability geometrically through operator-induced quotient geometry and spectral structure.
Within PPS, information accessibility is governed by the quotient geometry introduced in \mbox{Equation~\eqref{eq:quotient}} and the eigenspectrum defined in \mbox{Equation~\eqref{eq:spectral}.}
Observable information therefore becomes a property of induced operator geometry rather than solely of probabilistic compression.
This perspective also relates naturally to modern representation learning. 
Many learning systems implicitly construct observable subspaces through learned readout operators, as exemplified by Equation~\eqref{eq:wtw}~\cite{bengio_rep,scholkopf_smola}.
From the PPS viewpoint, latent representations should therefore be interpreted together with the observation geometry through which they become accessible.

\subsection{Relation to Observability Theory}

PPS also exhibits structural parallels with classical observability theory in control systems~\cite{kalman}.
Classical observability theory studies whether internal dynamical states can be reconstructed from observable outputs over time.
By contrast, PPS focuses on structural accessibility induced by observation operators acting on latent representation spaces~\cite{halmos,reed_simon,bhatia}.
Despite this distinction, both share a common conceptual principle:
observable behavior depends not only on latent states themselves, but also on the structure governing access to those states.
Within PPS, this structure is encoded through the operator geometry induced by $\Pi$ and the quotient structure defined in Equation~\eqref{eq:quotient}.

\subsection{Interpretability and Structural Accessibility}

A central implication of PPS concerns the structural limits of output-based interpretability~\cite{doshi,rudin}.
Since observables generated through Equation~\eqref{eq:observable} are invariant under perturbations in $\ker(\Pi)$, they depend only on the quotient structure defined in \mbox{Equation~\eqref{eq:quotient}.}
The experiments presented in Sections~5 and~6 directly illustrate this phenomenon.
Kernel-confined perturbations remain observationally invisible despite latent modification, while eigenspectrum truncation systematically alters attribution behavior and effective observable dimensionality.
These results suggest that attribution maps and output-based explanations should not be interpreted as direct reconstructions of latent structure.
Rather, they characterize only the observable geometry induced through the observation operator.
From this viewpoint, interpretability limitations are not merely algorithmic deficiencies, but structural consequences of projection-mediated observability.

\subsection{Relation to the Platonic Representation Hypothesis}

Recent discussions surrounding the Platonic Representation Hypothesis~\cite{platonic_representation} suggest that independently trained neural systems may converge toward structurally similar latent representations.
PPS provides a complementary perspective on this idea.
Rather than focusing solely on latent representation similarity, PPS emphasizes the geometry induced by observation operators acting on those representations.
From this viewpoint, similarity between systems may arise not only from alignment of latent states themselves, but also from compatibility between induced observable geometries.
The operator-consistency relation introduced in Equation~\eqref{eq:intertwining} provides one possible formalization of this perspective.
The knowledge-distillation experiments further suggest that approximate preservation of operator geometry may serve as a measurable notion of structural compatibility between representation systems~\cite{hinton2015,rkd,crd,attention_transfer}. 
More broadly, PPS suggests that observability may provide a useful geometric language for relating representation learning, interpretability, information accessibility, and projection-mediated inference across diverse systems~\cite{amari,bengio_rep,bhatia}.
In this context, the term ``Platonic'' is used structurally to emphasize the distinction between latent structure and observable projection, rather than to invoke a metaphysical claim.

\subsection{Limitations and Future Work}

Several limitations of the present work should be emphasized.

First, the current PPS formulation primarily analyzes static observation geometry.
Extensions to temporally evolving operators and sequential systems remain future work.

Second, the empirical experiments are intentionally controlled and illustrative rather than benchmark-oriented.
The primary goal is conceptual validation of projection-mediated observability rather than state-of-the-art predictive performance.

Third, the present framework focuses primarily on linear self-adjoint positive semidefinite observation operators~\cite{bhatia,reed_simon}.
Extensions to nonlinear operators, stochastic observation processes, non-self-adjoint operators, non-Hilbert latent geometries, and temporally evolving observability remain important directions for future work.

\begin{itemize}
\item Operator-constrained representation learning;
\item Spectral regularization for interpretable latent geometry;
\item Projection-aware explanation methods;
\item Geometric formulations of transfer learning;
\item Extensions to sequential and multi-modal systems.
\end{itemize}

While the present study focuses on controlled synthetic experiments and
a CIFAR-10 distillation setting, broader empirical validation remains an
important direction for future work. In particular, evaluating PPS on
large-scale vision benchmarks, natural-language representation models,
medical imaging datasets, and foundation-model representations would
provide additional evidence regarding the practical applicability of the
framework beyond the controlled settings considered here.

For a nonlinear readout
\[
y=f(z),
\]
a possible extension of PPS can be obtained through
local linearization around a reference representation
\(z_0\):
\[
f(z)\approx f(z_0)+J_f(z_0)(z-z_0),
\]
where \(J_f(z_0)\) denotes the Jacobian of the readout.
This induces a local observation operator
\[
\Pi_{\mathrm{local}}
=
J_f(z_0)^T J_f(z_0).
\]
Under this formulation, observability becomes
state-dependent, yielding a local observable geometry
that may vary across the representation manifold.
Related approximations based on empirical Fisher
information matrices may provide an alternative route
toward nonlinear observability analysis.
Experimental validation of Jacobian- or Fisher-based
observation operators requires substantially different
benchmark settings and is therefore left for \mbox{future work.}

Another important direction concerns attribution analysis
under controlled latent factors and spurious correlations.
Within PPS, explanatory signals associated with latent
variables lying in \(\ker(\Pi)\) are expected to be
suppressed or absent from observable outputs, whereas
observable latent directions may contribute strongly to
attribution scores.

Empirical validation of this prediction on real-world
datasets remains an important topic for future work.
More broadly, future work should investigate whether
PPS-based representations improve robustness under
out-of-distribution conditions or facilitate transfer
learning across related tasks. Such evaluations would
provide further evidence regarding the practical
consequences of operator consistency.
\subsection{Relationship to Existing Frameworks}

The relationship between PPS and information bottleneck methods has been discussed above.
Here, we compare PPS more broadly with existing representation-analysis frameworks, including
kernel PCA, canonical correlation analysis (CCA), and information bottleneck formulations.
Although PPS shares certain structural similarities with these approaches, it addresses a
fundamentally different question: which latent directions become observable under a given observation operator.

Kernel PCA and related spectral methods~\cite{scholkopf_smola,jolliffe} analyze latent representations through 
spectral decomposition of kernel-induced operators. Their primary objective is dimensionality reduction 
and extraction of informative low-dimensional structure. In contrast, PPS does not seek dimensionality reduction 
itself. Rather, PPS characterizes which latent directions are observable under a given observation operator and 
formalizes observability through the quotient geometry $H/\ker(\Pi)$.

Similarly, canonical correlation analysis (CCA) and representation-alignment methods~\cite{hotelling1992}
seek correlated subspaces between multiple representations. These approaches are widely used for comparing 
neural representations and analyzing transferability across models. PPS addresses a complementary problem: 
instead of maximizing correspondence between representations, it characterizes the observable geometry induced 
by the observation operator itself. From the PPS perspective, representation-alignment methods may be 
interpreted as implicitly constructing mappings between observable quotient structures.

Information bottleneck formulations~\cite{tishby_ib,alei_vib} provide an information-theoretic framework in 
which representations are optimized through a trade-off between compression and predictive relevance. 
PPS differs in that it does not explicitly quantify information content through mutual information. 
Instead, it characterizes representation accessibility through operator-induced observability. 
In this sense, information bottleneck methods focus on how much information is retained, whereas PPS focuses on 
which latent directions can become observable.

These perspectives are therefore complementary rather than competing. Kernel methods, CCA, and information bottleneck 
approaches may all be viewed as constructing, constraining, or analyzing operators acting on latent representations. 
PPS provides an operator-theoretic framework for analyzing the observable geometry induced by such operators and the 
resulting limits of representation accessibility. Consequently, PPS should be viewed not as a replacement for these
frameworks, but as an operator-theoretic perspective that clarifies the geometric conditions under which latent information 
becomes observable. 

\section{Conclusions}

This paper introduced Platonic Projection Structures (PPS), an operator-theoretic framework for analyzing projection-mediated observability in representation learning systems.
Within PPS, observable quantities are generated not through direct access to latent representations, but through geometry induced by self-adjoint positive semidefinite observation operators.
The framework formalizes this perspective through the observable functional defined in Equation~\eqref{eq:observable}, the quotient structure introduced in Equation~\eqref{eq:quotient}, and the spectral decomposition in Equation~\eqref{eq:spectral}.
From this viewpoint, observability becomes a structural property of induced operator geometry rather than a property of latent states themselves.
The framework further provides a unified abstraction spanning quantum measurement, deep learning inference, representation transfer, and interpretability analysis~\cite{vonneumann,nielsen2000,bengio_rep,hinton2015}.
Quantum systems and neural systems were shown to admit structurally parallel formulations as projection-mediated observation processes, despite substantial differences in physical interpretation~\cite{vonneumann,biamonte,schuld}.
PPS also provides a geometric interpretation of representation transfer through the operator-consistency relation introduced in Equation~\eqref{eq:intertwining}.
The corresponding experiments demonstrated that operator-level consistency can be incorporated explicitly within knowledge-distillation pipelines while preserving predictive performance.
A central implication of PPS concerns the structural limits of output-based interpretability.
Because observable quantities depend only on the induced observable quotient geometry, attribution methods operating exclusively on outputs necessarily inherit constraints imposed by the observation operator itself.
The controlled experiments presented in this work verified several central predictions of the \mbox{framework, including}
\begin{enumerate}
\item Kernel-invariant observability;
\item Rank-controlled observable geometry;
\item Operator-consistent representation transfer.
\end{enumerate}
Collectively, these results support the PPS interpretation that observable behavior is governed by operator-induced geometry rather than by latent representations alone.
More broadly, PPS suggests that observability may serve as a unifying geometric principle connecting representation learning, interpretability, information accessibility, and projection-mediated inference across diverse systems.
Future work will investigate extensions to nonlinear operators, sequential systems, adaptive observation geometry, multi-modal architectures, and projection-aware interpretability methods.
Ultimately, the PPS framework proposes a shift in perspective:
rather than asking only what latent representations contain, one may instead ask what the induced observation geometry allows a system to observe.







\section*{Author Contributions}
Conceptualization, K.I. and B.P.G.; methodology, K.I.; software, B.P.G.; validation, J.W. and J.S.; formal analysis, K.I.; resources, B.P.G. and J.S.; data curation, J.W.; writing---original draft preparation, K.I. and B.P.G.; writing---review and editing, K.I., J.W. and J.S.; visualization, B.P.G.; supervision, K.I.; project administration, K.I.; funding acquisition, K.I. All authors have read and agreed to the published version of the manuscript.

\section*{Funding}
This research received no external funding. The APC was funded by Suwa University of Science.

\section*{Data Availability Statement}
The synthetic data used in this study can be reproduced using the procedures described in the manuscript. The code used for the synthetic experiments, CIFAR-10 distillation experiments, and synthetic-data generation procedures will be made publicly available through a GitHub repository upon acceptance of the manuscript.

\section*{Conflicts of Interest}
The authors declare no conflicts of interest.

\section*{Abbreviations}
\begin{tabular}{@{}ll}
PPS  & Platonic Projection Structures \\
KL   & Kullback--Leibler \\
KD   & Knowledge Distillation \\
IB   & Information Bottleneck \\
CNN  & Convolutional Neural Network \\
PSD  & Positive Semidefinite \\
SHAP & SHapley Additive exPlanations \\
XAI  & Explainable Artificial Intelligence \\
\end{tabular}

\section*{References}

\section*{References}


\end{document}